\newcommand{\ActionSet}{{\mathcal{A}}}
\newcommand{\InputSet}{{\mathcal{X}}}
\newcommand{\E}{{\operatorname{\mathbf{E}}}}  
\newcommand{\kld}[2]{{\operatorname{d_{KL}}\left(#1,#2\right)}}  
\newcommand{\vecb}[1]{{\mathbf{#1}}}
\newcommand{\abs}[1]{\left\lvert#1\right\rvert}  
\newcommand{\norm}[1]{\left\lVert#1\right\rVert}  
\newcommand{\Rset}{{\mathbb{R}}}  
\newcommand{\setcard}[1]{\left\lvert#1\right\rvert}  
\newcommand{\indfunc}[1]{{\mathbb{I}(#1)}}  
\newcommand{\defeq}{{\stackrel{\mathrm{def}}{=}}}
\newcommand{\assign}{{\leftarrow}}
\newenvironment{proof}[1][Proof]{\begin{trivlist}\item[\hskip \labelsep {\bfseries #1}]}{\end{trivlist}}
\newtheorem{lemma}{Lemma}
\newtheorem{theorem}{Theorem}
\newcommand{\eqnref}[1]{Equation~(\ref{#1})}
\newcommand{\lemref}[1]{Lemma~\ref{#1}}
\newcommand{\thmref}[1]{Theorem~\ref{#1}}
\newcommand{\condref}[1]{Condition~\ref{#1}}
\newcommand{\corref}[1]{Corollary~\ref{#1}}
\newcommand{\secref}[1]{Section~\ref{#1}}
\newcommand{\algref}[1]{Algorithm~\ref{#1}}
\newcommand{\eg}{\textit{e.g.}}
\newtheorem{corollary}{Corollary}
\renewenvironment{proof}{Proof.}{{$\qquad\Box$}}
\newcommand{\citet}[1]{\cite{#1}}
\newcommand{\citep}[1]{\cite{#1}}
\newcommand{\ts}{Thompson Sampling}
\newcommand{\gts}{Generalized Thompson Sampling}
\newcommand{\expert}{\mathcal{E}}
\newcommand{\cref}[1]{C\ref{#1}}
\renewcommand{\condref}[1]{Condition~C\ref{#1}}
\newcommand{\lihong}[1]{\textbf{[[LL: #1]]}}
\renewcommand{\lihong}[1]{}
\title{Generalized Thompson Sampling for \\ Contextual Bandits}
\author{
Lihong Li \\
Microsoft Research \\
Redmond, WA 98052 \\
\texttt{lihongli@microsoft.com} \\
}
\begin{document}

\maketitle

\begin{abstract}
\ts, one of the oldest heuristics for solving multi-armed bandits, has recently been shown to demonstrate state-of-the-art performance.  The empirical success has led to great interests in theoretical understanding of this heuristic.  In this paper, we approach this problem in a way very different from existing efforts.  In particular, motivated by the connection between \ts\ and exponentiated updates, we propose a new family of algorithms called \gts\ in the expert-learning framework, which includes \ts\ as a special case.  Similar to most expert-learning algorithms, \gts\ uses a loss function to adjust the experts' weights.  General regret bounds are derived, which are also instantiated to two important loss functions: square loss and logarithmic loss.  In contrast to existing bounds, our results apply to quite general contextual bandits.  More importantly, they quantify the effect of the ``prior'' distribution on the regret bounds.
\end{abstract}


\section{Introduction}

\ts~\cite{Thompson33Likelihood}, one of the oldest heuristics for solving stochastic multi-armed bandits, embodies the principle of \emph{probability matching}.  Given a prior distribution over the underlying, unknown reward generating process as well as past observations of rewards, one can maintain a posterior distribution of which arm is optimal.  \ts\ then selects arms randomly according to the current posterior distribution.

While having being unpopular for decades, this algorithm was recently shown to be state-of-the-art in empirical studies, and has found success in important applications like news recommendation and online advertising~\citep{Scott10Modern,Graepel10Web,Chapelle12Empirical,May12Optimistic}.  In addition, it has other advantages such as robustness to observation delay~\cite{Chapelle12Empirical} and simplicity in implementation, compared to the dominant strategies based on upper confidence bounds (UCB).

Despite the empirical success, theoretical understanding of finite-time performance of \ts\ has been limited until very recently.  The first such result is provided by \citet{Agrawal12Analysis} for \emph{non-contextual} $K$-armed bandits, who prove a nontrivial problem-dependent regret bound when the prior of an arm's expected reward is a Beta distribution.  Later on, improved bounds are found for the same setting~\citep{Kaufmann12Thompson,Agrawal13Further}, which match the asymptotic regret lower bound~\citet{Lai85Asymptotically}.

For contextual bandits~\citep{Langford08Epoch}, only two pieces of work are available, to the best of our knowledge.  \citet{Agrawal13Thompson} analyze linear bandits, where a Gaussian prior is used on the weight vector space, and a Gaussian likelihood function is assumed for the reward function.  The authors are able to show the regret grows on the order of $d\sqrt{T}$, which is only a $\sqrt{d}$ factor away from a known matching lower bound~\citep{Chu11Contextual}.  In contrast, \cite{Russo13Learning} establish an interesting connection between UCB-style analysis and the \emph{Bayes risk} of \ts, based on the probability-matching property.  This observation allows the authors to obtain Bayes risk bound based on a novel metric, known as \emph{margin dimension}, of an arbitrary function class that essentially measures how fast upper confidence bounds decay.

All the existing work above relies critically either on advanced properties of the assumed prior distribution (such as in the case of Beta distributions), or on the assumption that the prior is correct (in the analysis of Bayes risk of \cite{Russo13Learning}).  Such analysis, although very interesting and important for better understanding \ts, seems hard to be generalized to general (possibly nonlinear) contextual bandits.  Furthermore, none of the existing theory is able to quantify the role of prior plays in controlling the regret, although in practice better domain knowledge is often available to construct good priors that should ``accelerate'' learning.

This paper attempts to address the limitations of prior work, from a very different angle.  Based on a connection between \ts\ and exponentiated update rules, we propose a family of contextual-bandit algorithms called \gts\ in the expert-learning framework~\citep{Cesabianchi06Prediction}, where each expert corresponds to a contextual policy for arm selection.  Similar to \ts, \gts\ is a randomized strategy, following an expert's policy more often if the expert is more likely to be optimal.  Different from \ts, it uses a loss function to update the experts' weights; \ts\ is a special of \gts\ when the logarithmic loss is used.\footnote{It should be emphasized that, in this paper, we use the loss function to measure how well an expert \emph{predicts} the average reward, given the context and the selected arm.  In general, the loss function and the reward may be completely unrelated.  Details are given later.}

Regret bounds are then derived under certain conditions.  The proof relies critically on a novel application of a ``self-boundedness'' property of loss functions in competitive analysis.  The results are instantiated to the square and logarithmic losses, two important loss functions.  Not only do these bounds apply to quite general sets of experts, but they also quantify the impact of the prior distribution on regret.  These benefits come at a cost of a worse dependence on the number of steps.  However, we believe it is possible to close the gap with a more involved analysis, and the connection between (Generalized) \ts\ to expert-learning will likely lead to further interesting insights and algorithms in future work.

\section{Preliminaries}

Contextual bandits can be formulated as the following game between the learner and a stochastic environment.  Let $\InputSet$ and $\ActionSet$ be the sets of context and arms, and let $K=\setcard{\ActionSet}$.  At step $t=1,2,\ldots,T$:
\begin{compactitem}
\item{Learner observes the context $x_t\in\InputSet$, where $x_t$ can be chosen by an adversary.}
\item{Learner selects arm $a_t\in\ActionSet$, and receives reward $r_t\in\{0,1\}$, with expectation $\mu(x_t,a_t)$.}
\end{compactitem}
Note that the setup above allows the contexts to be chosen by an adversary, which is a more general setting than typical contextual bandits~\cite{Langford08Epoch}.  The reader may notice we require the reward to be binary, instead of being in $[0,1]$.  This choice will make our exposition simpler, without sacrificing loss of generality.  Indeed, as also suggested by \cite{Agrawal12Analysis}, if reward $r\in(0,1)$ is received, one can convert it into a binary pseudo-reward $\tilde{r}\in\{0,1\}$ as follows: let $\tilde{r}$ be $1$ with probability $r$, and $0$ otherwise.  Clearly, the bandit process remains essentially the same, with the same optimal expert and regrets.

Motivated by prior work on \ts\ with parametric function classes~\cite{Chapelle12Empirical}, we allow the learner to have access to a set of experts, $\expert=\{\expert_1,\ldots,\expert_N\}$, each one of them makes predicts about the average reward $\mu(x,a)$.  Let $f_i$ be the associated prediction function of expert $\expert_i$.  Its arm-selection policy in context $x$ is simply the greedy policy with respect to the reward predictions: $\expert_i(x)=\max_{a\in\ActionSet}f_i(x,a)$.  This setting can naturally be used to capture the use of parametric function classes: for example, when generalized linear models are used to predict $\mu(x,a)$~\cite{Graepel10Web,Chapelle12Empirical}, each weight vector is an expert.  The only difference is that our framework works with a discrete set of experts.  Using a covering device, however, it is possible to approximate a continuous function class by a finite set of cardinality $N$, where $N$ is the covering number.

We define the $T$-step average regret of the learner by
\begin{equation}
R(T) = \max_{1 \le i \le N} \sum_{t=1}^T \mu(x_t,\expert_i(x_t)) - \E\left[\sum_{t=1}^T \mu(x_t,a_t)\right] , \label{eqn:regret}
\end{equation}
where the expectation refers to the possible randomization of the learner in selecting $a_t$.  As in all existing analysis for \ts, we make the realization assumption that one of the experts, $\expert^*\in\expert$, correctly predicts the average reward.  Without loss of generality, let $\expert_1$ be this expert; in other words, $f_1(x,a)\equiv\mu(x,a)$.  Clearly, $\expert_1$ is the reward-maximizing expert, so $R(T)=\sum_t \mu(x_t,\expert_1(x_t)) - \E\left[\sum_t \mu(x_t,a_t)\right]$.
\lihong{Extensions without the realization assumption is discussed briefly in \secref{sec:discussions}.}

With the notation above, \ts\ can be described as follows.  It requires as input a ``prior'' distribution $\vecb{p}=(p_1,\ldots,p_N)\in\Rset_+^N$ over the experts, where $\norm{\vecb{p}}_1=1$.  Intuitively, $p_i$ may be interpreted as the prior probability that $\expert_i$ is the reward-maximizing expert.  The algorithm starts with the first ``posterior'' distribution $\vecb{w}_1=(w_{1,1},\ldots,w_{N,1})$ where $w_{i,1}=p_i$.  At step $t$, the algorithm samples an expert based on the posterior distribution $\vecb{w}_t$ and follows that expert's policy to choose action.  Upon receiving the reward, the weights are updated by $w_{i,t+1} \propto w_{i,t} \exp\left(-\ell(f_i(x_t,a_t),r_t)\right)$, where $\ell(f,r)$ is the negative log-likelihood.

Finally, one can assume the optimal expert, $\expert^*$, is drawn from an \emph{unknown} prior distribution, $\vecb{p}^*=(p^*_1,\ldots,p^*_N)$.  The expected $T$-step \emph{Bayes regret} can then be defined: $R(T,\vecb{p}^*) \defeq \E_{\expert^*\sim\vecb{p}^*}\left[R(T)\right]$.  It should be noted that the Bayes risk considered by other authors~\cite{Russo13Learning} is just $R(T,\vecb{p})$, where $\vecb{p}$ is the prior used by \ts.  In general, the true prior $\vecb{p}^*$ is unknown, so $\vecb{p}\ne\vecb{p}^*$.  We believe the Bayes risk defined with respect to $\vecb{p}^*$ is more reasonable in light of the almost inevitable misspecificatin of priors in practice.

\section{Generalized Thompson Sampling}

An observation with \ts\ from the previous section is that its Bayes update rule can be viewed as an exponentiated update with the logarithmic loss (see also \cite{Cesabianchi06Prediction}).  After receiving a reward, each expert is penalized for the mismatch in its prediction ($f_i$) and the observed reward, and the penalty happens to be the logarithmic loss in \ts.  Therefore, in principle, one can use other loss function to get a more general family of algorithms.  In fact, \emph{none} of the existing regret analyses~\cite{Agrawal12Analysis,Agrawal13Further,Agrawal13Thompson,Kaufmann12Thompson} relies on the interpretation that $\vecb{w}_t$ are meant to be Bayesian posteriors, and yet manages to show strong regret bound for \ts.\footnote{The analysis of \cite{Russo13Learning} is different since the metric (Bayes risk) is defined with respect to the prior.}  The above observations suggest the promising performance of \ts\ is not due to its Bayesian nature, and also motivates us to develop a more general family of algorithms known as \emph{\gts}.

We denote by $\ell(\hat{r},r)$ the loss incurred by reward prediction $\hat{r}$ when the observed reward is $r$.  \gts\ performs exponentiated updates to adjust experts' weights, and follows a randomly selected expert when making decisions, similar to \ts.  In addition, the algorithm also allows mixing of the exponentially weighted distribution and a uniform distribution controlled by $\gamma$.  The pseudocode is given in \algref{alg:gts}.


\begin{algorithm}
	\caption{Generalized Thompson Sampling}
	\label{alg:gts}
	\begin{algorithmic}
		\STATE \textbf{Input:} $\eta>0$, $\gamma>0$, $\{\expert_1,\ldots,\expert_N\}$, and prior $\vecb{p}$
		\STATE Initialize posterior: $\vecb{w}_1\assign\vecb{p}$; $W_1\assign\norm{\vecb{w}_1}_1=1$
		\FOR{$t=1,\dots,T$}
		\STATE Receive context $x_t\in\InputSet$
		\STATE Select arm $a_t$ according to the mixture probabilities: for each $a$
		\[
		\Pr(a) = (1-\gamma)\sum_{i=1}^N\frac{w_{i,t} \indfunc{\expert_i(x_t)=a}}{W_t} + \frac{\gamma}{K}
		\]
		\STATE Observe reward $r_t$, and updates weights:
		\[
		\forall i: w_{i,t+1} \assign w_{i,t}\cdot\exp\left(-\eta\cdot\ell\left(f_i(x_t,a_t),r_t\right)\right); \qquad W_{t+1}\assign\norm{\vecb{w}_{t+1}}_1=\sum_iw_{i,t+1}
		\]
		\ENDFOR
	\end{algorithmic}
\end{algorithm}

Clearly, \gts\ includes \ts\ as a special case, by setting $\eta=1$, $\gamma=0$, and $\ell$ to be the logarithmic loss: $\ell(\hat{r},r) = \indfunc{r=1}\ln1/\hat{r}+\indfunc{r=0}\ln1/(1-\hat{r})$.  Another loss function considered in this paper is the square loss: $\ell(\hat{r},r) = (\hat{r}-r)^2$.

\section{Analysis} \label{sec:analysis}

For convenience, the analysis here uses the following shorthand notation:
\begin{compactitem}
\item{The history of the learner up to step $t$ is $F_t\defeq(x_1,a_1,r_1,\ldots,x_{t-1},a_{t-1},r_{t-1},x_t)$.}
\item{The \emph{immediate regret} of expert $\expert_i$ in context $x$ is $\Delta_i(x)\defeq\mu(x,\expert_1(x)) - \mu(x,\expert_i(x))$.}
\item{The \emph{normalized weight} at step $t$ is $\bar{\vecb{w}}_t\defeq\vecb{w}_t/W_t$.}
\item{The \emph{shifted loss} incurred by expert $\expert_i$ in triple $(x,a,r)$ is denoted by $\hat{l}_i(r|x,a)\defeq\ell(f_i(x,a),r)-\ell(f_1(x,a),r)$.  In particular, define $\hat{l}_{i,t}\defeq\hat{l}_i(r_t|x_t,a_t)$.  In other words, $\hat{l}_i$ is the loss relative to the best expert ($\expert_1$), and \emph{can be negative}.}
\item{The \emph{average shifted loss} at step $t$ is $\bar{l}_t\defeq\E_{r_t,a_t|F_t}\left[\sum_i\bar{w}_{i,t}\hat{l}_i(r_t|x_t,a_t)\right]$.}
\end{compactitem}

\subsection{Main Theorem} \label{sec:main}

Clearly, conditions are needed to relate the loss function to the regret.  Our results need the following assumptions:
\begin{compactenum}[(C1)]
\item{(Consistency) For all $(x,a)\in\InputSet\times\ActionSet$, $\E_{r|x,a}\left[\hat{l}_i(r|x,a)\right] \ge 0$.} \label{cond:consistency}
\item{(Informativeness) There exists a constant $\kappa_1\in\Rset_+$ such that $\Delta_i(x_t) \le \kappa_1\sqrt{\bar{l}_t}$.} \label{cond:informativeness}
\item{(Boundedness) The shifted loss $\hat{l}_i$ assumes values in $[-1,1]$.} \label{cond:boundedness}
\item{(Self-boundedness) There exists a constant $\kappa_2\in\Rset_+$ such that, for all $(x,a)\in\InputSet\times\ActionSet$, $\E_{r|x,a}\left[\hat{l}_i(r|x,a)^2\right] \le \kappa_2 \E_{r|x,a}\left[\hat{l}_i(r|x,a)\right]$; namely, the second moment is bounded, up to a constant, by the first moment of the shifted loss.} \label{cond:selfboundedness}
\end{compactenum}

\begin{theorem} \label{thm:main}
Under Conditions~\cref{cond:consistency} and \cref{cond:informativeness}, the expected $T$-step regret of Generalized Thompson Sampling is
\[
R(T) \le \kappa_1\sqrt{T\cdot\E\left[\sum_{t=1}^T\bar{l}_t\right]} + \gamma T .
\]
\end{theorem}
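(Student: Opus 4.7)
The plan is to bound the per-step expected instantaneous regret by a term involving $\sqrt{\bar{l}_t}$, sum over $t$, and then apply Cauchy--Schwarz plus Jensen to collapse everything into the stated form. The roles of the four conditions are: C1 guarantees that $\bar{l}_t \ge 0$ so the square roots make sense; C2 directly controls the per-step regret in terms of $\bar{l}_t$; while C3 and C4 are not needed for this theorem, only for the later instantiations that bound $\E[\sum_t\bar{l}_t]$.

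First I would condition on the history $F_t$ and use the sampling rule of Algorithm~\ref{alg:gts} to decompose
\[
\E[\mu(x_t,\expert_1(x_t)) - \mu(x_t,a_t) \mid F_t] = (1-\gamma)\sum_{i=1}^N \bar{w}_{i,t}\Delta_i(x_t) + \gamma\Bigl(\mu(x_t,\expert_1(x_t)) - \tfrac{1}{K}\textstyle\sum_a \mu(x_t,a)\Bigr).
\]
The second term is at most $\gamma$ because rewards lie in $[0,1]$, so the per-step expected regret is bounded by $\sum_i \bar{w}_{i,t}\Delta_i(x_t) + \gamma$. Applying \condref{cond:informativeness} and $\sum_i \bar{w}_{i,t}=1$ yields
\[
\E[\mu(x_t,\expert_1(x_t)) - \mu(x_t,a_t) \mid F_t] \le \kappa_1\sqrt{\bar{l}_t} + \gamma.
\]

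Next I would sum over $t=1,\ldots,T$, take the outer expectation, and apply the Cauchy--Schwarz inequality pointwise in the sample space, $\sum_{t=1}^T \sqrt{\bar{l}_t} \le \sqrt{T \sum_{t=1}^T \bar{l}_t}$, followed by Jensen's inequality for the concave function $\sqrt{\cdot}$ to move the expectation inside the square root:
\[
R(T) \le \kappa_1\,\E\!\left[\sqrt{T\sum_{t=1}^T \bar{l}_t}\right] + \gamma T \le \kappa_1\sqrt{T\cdot\E\!\left[\sum_{t=1}^T \bar{l}_t\right]} + \gamma T.
\]
To justify taking the square root one uses \condref{cond:consistency}: since $\E_{r|x,a}[\hat{l}_i(r|x,a)]\ge 0$ for every $i$ and every $(x,a)$, the weighted average $\bar{l}_t$ is non-negative almost surely, so the cumulative quantity inside the square root is non-negative.

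The only place where care is really required is the step using \condref{cond:informativeness}: one has to verify that the quantity being bounded is $\sum_i \bar{w}_{i,t}\Delta_i(x_t)$ rather than $\Delta_i(x_t)$ for a single $i$, which follows because $\sum_i \bar{w}_{i,t}=1$ and the stated bound $\Delta_i(x_t)\le \kappa_1\sqrt{\bar{l}_t}$ holds for every $i$. Everything else is a routine decomposition plus Cauchy--Schwarz/Jensen, so the proof should be short.
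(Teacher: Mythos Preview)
Your proposal is correct and follows essentially the same route as the paper: decompose the per-step regret into the expert-mixture part and the uniform-exploration part, bound the latter by $\gamma$, control the former via \condref{cond:informativeness}, then sum over $t$ and apply Cauchy--Schwarz followed by Jensen to pull the expectation inside the square root. The only cosmetic difference is that the paper applies \condref{cond:informativeness} in a per-expert form, $\Delta_i(x_t)\le\kappa_1\sqrt{\E_{r_t,a_t|F_t}[\hat{l}_i(r_t|x_t,a_t)]}$, and then uses Jensen over the weights $\bar{w}_{i,t}$ to reach $\kappa_1\sqrt{\bar{l}_t}$, whereas you invoke the condition directly as stated (already in terms of $\bar{l}_t$) and skip that inner Jensen step; both arrive at the same intermediate bound and the same final inequality.
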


\begin{proof}
The expected $T$-step regret may be rewritten more explicitly, and then bounded, as follows:
\begin{eqnarray*}
	R(T)
&\le& \E\left[\sum_{t=1}^T\left((1-\gamma)\sum_{i=1}^N\bar{w}_{i,t}\Delta_i(x_t)_+\gamma\right)\right] \\
&=& (1-\gamma)\E\left[\sum_t\sum_i\bar{w}_{i,t}\Delta_i(x_t)\right] + \gamma T \\
&\le& (1-\gamma)\E\left[\sum_t\sum_i\bar{w}_{i,t}\kappa_1\sqrt{\E_{r_t,a_t|F_t}\left[\hat{l}_i(r_t|x_t,a_t)\right]}\right] + \gamma T \\
&=& \kappa_1(1-\gamma)\E\left[\sum_t\sum_i\bar{w}_{i,t}\sqrt{\E_{r_t,a_t|F_t}\left[\hat{l}_i(r_t|x_t,a_t)\right]}\right] + \gamma T \\
&\le& \kappa_1(1-\gamma)\E\left[\sum_t\sqrt{\sum_i\bar{w}_{i,t}\E_{r_t,a_t|F_t}\left[\hat{l}_i(r_t|x_t,a_t)\right]}\right] + \gamma T \\
&\le& \kappa_1(1-\gamma)\sqrt{T\cdot\E\left[\sum_t\bar{l}_t\right]} + \gamma T .
\end{eqnarray*}
\end{proof}

Now the question becomes one of bounding the expected total shifted loss, $\E\left[\sum_t\bar{l}_t\right]$.  This problem is tackled by the following key lemma, which makes use the self-boundedness property of the loss function.  The lemma may be of interest on its own.  Similar properties were used in \cite{Agarwal12Contextual} in a very different way.

\begin{lemma} \label{lem:lbar}
Under Conditions~\cref{cond:boundedness} and \cref{cond:selfboundedness}, with $\eta$ chosen 
to be $(2(e-2)\kappa_2)^{-1}$,
the expected total shifted loss of \gts\ is bounded by a constant independent of $T$:
\[
\sum_{t=1}^T\bar{l}_t \le 4(e-2)\kappa_2\ln\frac{1}{p_1} .
\]
\end{lemma}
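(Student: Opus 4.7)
The plan is to run a standard Hedge-style potential argument on a renormalized weight sequence and then close the resulting inequality using self-boundedness (C4).

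First, I would define shifted potentials $\phi_{i,t}\defeq w_{i,t}\exp\!\left(\eta\sum_{s<t}\ell(f_1(x_s,a_s),r_s)\right)$, so the multiplicative update reads $\phi_{i,t+1}=\phi_{i,t}\exp(-\eta\hat{l}_{i,t})$. Because the shift is common to all $i$, the normalized weights are preserved, $\bar{w}_{i,t}=\phi_{i,t}/\Phi_t$ with $\Phi_t\defeq\sum_i\phi_{i,t}$; moreover $\Phi_1=\norm{\vecb{p}}_1=1$ and $\phi_{1,t}=p_1$ for every $t$, so $\Phi_{T+1}\ge p_1$.

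Second, using the inequality $e^x\le 1+x+(e-2)x^2$ for $|x|\le 1$ (applicable because (C3) and $\eta\le 1$ give $|\eta\hat{l}_{i,t}|\le 1$), I would bound $\Phi_{t+1}/\Phi_t$, then take logs with $\ln(1+y)\le y$, telescope, and apply $\ln\Phi_{T+1}\ge\ln p_1$ to obtain
\[
\eta\sum_{t=1}^T\sum_i\bar{w}_{i,t}\hat{l}_{i,t}-(e-2)\eta^2\sum_{t=1}^T\sum_i\bar{w}_{i,t}\hat{l}_{i,t}^2 \;\le\; \ln(1/p_1).
\]
Taking expectations, the first-moment term equals $\eta\sum_t\E[\bar{l}_t]$ since $\bar{w}_{i,t}$ is $F_t$-measurable. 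For the second-moment term, condition successively on $(F_t,a_t)$ and then on $F_t$: condition (C4) applied at the inner layer gives $\E[\hat{l}_{i,t}^2\mid x_t,a_t]\le\kappa_2\E[\hat{l}_{i,t}\mid x_t,a_t]$, whence $\E[\sum_i\bar{w}_{i,t}\hat{l}_{i,t}^2\mid F_t]\le\kappa_2\bar{l}_t$. The inequality thus reduces to $\bigl(\eta-(e-2)\eta^2\kappa_2\bigr)\E[\sum_t\bar{l}_t]\le\ln(1/p_1)$, and the specified choice $\eta=1/(2(e-2)\kappa_2)$ makes $(e-2)\eta^2\kappa_2=\eta/2$, so the left-hand coefficient becomes $\eta/2=1/(4(e-2)\kappa_2)$ and the stated bound falls out.

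The main obstacle I anticipate is the correct layering of conditioning when invoking self-boundedness: (C4) holds only conditional on $(x_t,a_t)$, so it cannot be applied directly under the outer expectation of $\sum_i\bar{w}_{i,t}\hat{l}_{i,t}^2$, and one must execute the tower-property step explicitly to pull $\bar{w}_{i,t}$ outside the inner conditional expectation before C4 is used. A minor secondary check is verifying $\eta\le 1$ under the regime of interest, so that the quadratic tail bound on $e^x$ is valid over the full range of $\eta\hat{l}_{i,t}$.
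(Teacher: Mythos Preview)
Your proposal is correct and follows essentially the same route as the paper: both use the shifted-loss reformulation (your $\phi_{i,t}$ is exactly the paper's device of ``pretending \gts\ uses $\hat{l}_{i,t}$ for weight updates''), the Hedge-style potential bound via $e^x\le 1+x+(e-2)x^2$ on $[-1,1]$, the self-boundedness condition to collapse the second-moment term into the first, and the lower bound $\Phi_{T+1}\ge p_1$ from $\hat{l}_{1,t}\equiv 0$. Your treatment of the conditioning layers when invoking (C4) is in fact slightly more explicit than the paper's, but the argument is the same.
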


\begin{proof}
First, observe that if the shifted loss $\hat{l}_{i,t}$ is used in \gts\ to replace the loss $\ell(f_i(x_t,a_t),r_t)$, the algorithm behaves identically.  The rest of the proof uses this fact, pretending \gts\ uses $\hat{l}_{i,t}$ for weight updates.

For any step $t$, the weight sum changes according to 
\begin{eqnarray*}
\lefteqn{\ln\frac{W_{t+1}}{W_t} = \ln\left(\sum_i \bar{w}_{i,t} \exp\left(-\eta\hat{l}_{i,t}\right)\right)} \\
  &\le& \ln\left(\sum_i\bar{w}_{i,t}\left(1-\eta\hat{l}_{i,t}+(e-2)\eta^2\hat{l}_{i,t}^2\right)\right) \\
  &=& \ln\left(1-\eta\sum_i\bar{w}_{i,t}\hat{l}_{i,t}+(e-2)\eta^2\sum_i\bar{w}_{i,t}\hat{l}_{i,t}^2\right) \\
  &\le& -\eta\sum_i\bar{w}_{i,t}\hat{l}_{i,t}+(e-2)\eta^2\sum_i\bar{w}_{i,t}\hat{l}_{i,t}^2 ,
\end{eqnarray*}
where the first inequality is due to \condref{cond:boundedness} and the inequality $e^x\le1-x+(e-2)x^2$ for $x\in[-1,1]$; the second inequality is due to the inequality $\ln(1-x)<x$ for $x<1$.

Conditioned on the observed context and selected arm at step $t$, we take expectation of the above expressions, with respect to the randomization in observed reward, leading to
\begin{eqnarray*}
  \E_{r|F_t,a_t}\left[\ln\frac{W_{t+1}}{W_t}\right] \le -\eta\sum_i\bar{w}_{i,t}\E_{r|F_t,a_t}\left[\hat{l}_{i,t}\right]+(e-2)\eta^2\sum_i\bar{w}_{i,t}\E_{r|F_t,a_t}\left[\hat{l}_{i,t}^2\right] .
\end{eqnarray*}
\condref{cond:selfboundedness} then implies
\[
\E_{r|F_t,a_t}\left[\ln\frac{W_{t+1}}{W_t}\right] \le -\eta\left(1-(e-2)\eta\kappa_2\right)\sum_i\bar{w}_{i,t}\E_{r|F_t,a_t}\left[\hat{l}_{i,t}\right] .
\]
Setting $\eta=\left(2(e-2)\kappa_2\right)^{-1}$ gives
\[
\E_{r|F_t,a_t}\left[\ln\frac{W_{t+1}}{W_t}\right] \le -\frac{1}{4(e-2)\kappa_2}\sum_i\bar{w}_{i,t}\E_{r|F_t,a_t}\left[\hat{l}_{i,t}\right] .
\]
The above inequality holds for any $a_t$, so also holds in expectation if $a_t$ is randomized:
\[
\E_{a_t,r_t|F_t}\left[\ln\frac{W_{t+1}}{W_t}\right] \le -\frac{1}{4(e-2)\kappa_2}\sum_i\bar{w}_{i,t}\E_{a_t,r_t|F_t}\left[\hat{l}_{i,t}\right] .
\]
Finally, summing the left-hand side over $t=1,2,\ldots,T$ gives
\[
\E\left[\frac{W_{T+1}}{W_1}\right] \le -\frac{1}{4(e-2)\kappa_2}\sum_t\sum_i\bar{w}_{i,t}\E_{a_t,r_t|F_t}\left[\hat{l}_{i,t}\right] = -\frac{1}{4(e-2)\kappa_2}\E\left[\sum_t\bar{l}_t\right] ,
\]
which implies
\[
\E\left[\sum_t\bar{l}_t\right] \le 4(e-2)\kappa_2\E\left[\frac{W_1}{W_{T+1}}\right] \le 4(e-2)\kappa_2\ln\frac{1}{w_{1,1}} =  4(e-2)\kappa_2\ln\frac{1}{p_1}.
\]
The last inequality above follows from the observation that $\hat{l}_{1,t}\equiv0$, and that $w_{1,t}\equiv w_{1,1}$.
\end{proof}

The following corollary follows directly from \thmref{thm:main} and \lemref{lem:lbar}:
\begin{corollary} \label{cor:main}
Under the conditions of \thmref{thm:main} and \lemref{lem:lbar}, the expected $T$-step regret of \gts is at most
\[
\sqrt{4\kappa_2(e-2)}\kappa_1(1-\gamma)\sqrt{T\cdot\ln\frac{1}{p_1}} + \gamma T .
\]
\end{corollary}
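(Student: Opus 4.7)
The plan is essentially a one-step substitution: Corollary~\ref{cor:main} is just the composition of \thmref{thm:main} with \lemref{lem:lbar}, so there is no genuinely new argument to construct. I would open by noting that both results apply simultaneously because the hypothesis list of the corollary explicitly assumes Conditions~\cref{cond:consistency}--\cref{cond:selfboundedness} and the tuning $\eta = (2(e-2)\kappa_2)^{-1}$.

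Next, I would invoke the bound on the expected cumulative shifted loss from \lemref{lem:lbar}, namely
\[
\E\Bigl[\sum_{t=1}^T \bar{l}_t\Bigr] \le 4(e-2)\kappa_2 \ln\frac{1}{p_1}.
\]
The sharper form of \thmref{thm:main} actually produced by its proof is $R(T) \le \kappa_1(1-\gamma)\sqrt{T\cdot\E[\sum_t \bar{l}_t]} + \gamma T$ (the $(1-\gamma)$ factor is kept all the way through the displayed chain of inequalities), so I would start from this sharper form rather than the slightly weakened statement recorded in the theorem.

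Plugging the lemma's bound into the theorem's bound and pulling the constant out of the square root then gives
\[
R(T) \le \kappa_1(1-\gamma)\sqrt{T \cdot 4(e-2)\kappa_2 \ln\frac{1}{p_1}} + \gamma T = \sqrt{4\kappa_2(e-2)}\,\kappa_1(1-\gamma)\sqrt{T\ln\frac{1}{p_1}} + \gamma T,
\]
which is exactly the desired inequality. There is no real obstacle here; the only thing to be careful about is to use the $(1-\gamma)$-preserving form from the proof of \thmref{thm:main} so that the final constant in front of $\sqrt{T\ln(1/p_1)}$ matches the $(1-\gamma)$ appearing in the corollary statement, and to remember that the $\bar{l}_t$ in \thmref{thm:main} enters under an outer expectation, which is precisely what \lemref{lem:lbar} controls.
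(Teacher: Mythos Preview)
Your proposal is correct and matches the paper's approach: the paper simply states that the corollary ``follows directly from \thmref{thm:main} and \lemref{lem:lbar}'' without writing out the substitution. Your observation that one must use the $(1-\gamma)$-preserving inequality from the last line of the proof of \thmref{thm:main} (rather than the slightly looser statement of the theorem) is exactly the small care needed to recover the $(1-\gamma)$ factor in the corollary.
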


The next corollary considers the Bayes regret, $R(T,\vecb{p}^*)$, with an unknown, true prior $\vecb{p}^*$:
\begin{corollary} \label{cor:average}
If the optimal expert is sampled from distribution $\vecb{p}^*$, the Bayes regret is at most
\[
R(T,\vecb{p}^*) \le \sqrt{4\kappa_2(e-2)}\kappa_1(1-\gamma)\sqrt{T\left(H(\vecb{p}^*)+\kld{\vecb{p}^*}{\vecb{p}}\right)} + \gamma T ,
\]
where $H(\vecb{p}^*)$ and $\kld{\vecb{p}^*}{\vecb{p}}$ are the standard entropy and KL-divergence.
\end{corollary}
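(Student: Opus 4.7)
The plan is to reduce Corollary~\ref{cor:average} directly to Corollary~\ref{cor:main} by conditioning on the identity of the optimal expert and then integrating against the unknown prior $\vecb{p}^*$. Concretely, fix any index $i$ and condition on the event $\{\expert^* = \expert_i\}$. Up to relabelling (the analysis never used that the reward-maximizing expert is literally indexed $1$, only that its initial weight equals $p_{I^*}$, since $\hat{l}_{I^*,t}\equiv 0$ and $w_{I^*,t}\equiv w_{I^*,1}=p_{I^*}$), Corollary~\ref{cor:main} gives
\[
\E\!\left[R(T)\mid \expert^* = \expert_i\right] \;\le\; \sqrt{4\kappa_2(e-2)}\,\kappa_1(1-\gamma)\sqrt{T\ln\frac{1}{p_i}} + \gamma T.
\]

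Next, I take expectation over $\expert^*\sim\vecb{p}^*$. The additive $\gamma T$ term is deterministic and passes through unchanged. For the leading term, Jensen's inequality applied to the concave function $\sqrt{\cdot}$ yields
\[
\E_{\expert^*\sim\vecb{p}^*}\!\left[\sqrt{T\ln(1/p_{\expert^*})}\right] \;\le\; \sqrt{T\cdot \sum_{i=1}^N p^*_i \ln\frac{1}{p_i}}.
\]
The remaining step is the standard cross-entropy decomposition:
\[
\sum_{i=1}^N p^*_i \ln\frac{1}{p_i} \;=\; -\sum_i p^*_i \ln p^*_i + \sum_i p^*_i \ln \frac{p^*_i}{p_i} \;=\; H(\vecb{p}^*) + \kld{\vecb{p}^*}{\vecb{p}}.
\]
Assembling the three pieces gives exactly the bound claimed in Corollary~\ref{cor:average}.

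There is no real obstacle here; this is purely a ``smoothing'' of the deterministic-prior bound via the true prior. The only point to be careful about is that the constants $\kappa_1,\kappa_2$ in Conditions~\condref{cond:informativeness} and~\condref{cond:selfboundedness} do not depend on which expert plays the role of the optimal one, so that the per-$i$ application of Corollary~\ref{cor:main} is uniform in $i$ and Jensen's inequality can be applied cleanly. Everything else is a single application of concavity of the square root plus the textbook identity $\E_{\vecb{p}^*}[\log(1/p)] = H(\vecb{p}^*) + \kld{\vecb{p}^*}{\vecb{p}}$.
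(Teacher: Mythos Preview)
Your argument is correct and matches the paper's proof exactly: both condition on the identity of the optimal expert, apply Corollary~\ref{cor:main} with $p_1$ replaced by the corresponding prior weight, average against $\vecb{p}^*$, invoke Jensen's inequality for $\sqrt{\cdot}$, and then use the cross-entropy identity $\sum_i p_i^*\ln(1/p_i)=H(\vecb{p}^*)+\kld{\vecb{p}^*}{\vecb{p}}$. Your remark that $\kappa_1,\kappa_2$ must be uniform in the identity of the optimal expert is a fair point the paper leaves implicit.
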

\begin{proof}
We have
\begin{eqnarray*}
R(T,\vecb{p}^*) &\le& \sum_{i=1}^N p^*_i \left(\sqrt{4\kappa_2(e-2)}\kappa_1(1-\gamma)\sqrt{T\cdot\ln\frac{1}{w_{i,1}}} + \gamma T\right) \\
&=& \sqrt{4\kappa_2(e-2)}\kappa_1(1-\gamma)\sqrt{T} \sum_i\left(w^*_i\sqrt{\ln\frac{1}{w_i}}\right) + \gamma T \\
&\le& \sqrt{4\kappa_2(e-2)}\kappa_1(1-\gamma)\sqrt{T} \sqrt{\sum_i\left(w^*_i\ln\frac{1}{w_i}\right)} + \gamma T \\
&=& \sqrt{4\kappa_2(e-2)}\kappa_1(1-\gamma)\sqrt{T\left(H(\vecb{p}^*)+\kld{\vecb{p}^*}{\vecb{p}}\right)} + \gamma T ,
\end{eqnarray*}
where the inequalities are due to \corref{cor:main} and Jensen's inequality, respectively.
\end{proof}

\subsection{Square Loss} \label{sec:squareloss}

We start with the simpler case of square loss.  It clearly satisfies \condref{cond:boundedness}.  \condref{cond:consistency} holds because of the following well-known fact:
\[
	\E_{r|x,a}\left[\hat{l}_i(r|x,a)\right]
= \E_{r|x,a}\left[(f_i(x,a)-r)^2-(f_1(x,a)-r)^2\right]
= (f_i(x,a)-f_1(x,a))^2 \ge 0.
\]
Conditions~\cref{cond:informativeness} and \cref{cond:selfboundedness} are also satisfied with $\kappa_1=\sqrt{\frac{2K}{\gamma}}$ and $\kappa_2=4$, from prior work~\cite{Agarwal12Contextual}.  Plugging these values in \corref{cor:main} and choosing $\gamma=\Theta\left(\sqrt[3]{K/T}\right)$, we obtain the regret bound of $O\left(\sqrt{\ln\frac{1}{p_1}}K^{1/3}T^{2/3}\right)$, and the Bayes regret bound of $O\left(\sqrt{H(\vecb{p}^*+\kld{\vecb{p}^*}{\vecb{p}^*}}K^{1/3}T^{2/3}\right)$.

\subsection{Logarithmic Loss} \label{sec:logloss}


For logarithmic loss, we assume the shifted loss of all experts are bounded in $[-\beta/2,\beta/2]$ for some constant $\beta\in\Rset_+$, so that one can normalize the shifted logarithmic loss to the range of $[-1,1]$ by defining:
\begin{equation}
l_i(r|x,a) = \frac{\indfunc{r=1}}{\beta}\ln\frac{1}{f_i(x,a)} + \frac{\indfunc{r=0}}{\beta}\ln\frac{1}{1-f_i(x,a)} . \label{eqn:nll}
\end{equation}
This assumption can usually be satisfied in practice, and seems necessary to derive finite-time guarantees. 
Note that this assumption is slightly weaker than the more common assumption that the logarithmic loss itself is bounded (\eg, \cite{Filippi11Parametric}).

We now verify all necessary conditions.  \condref{cond:consistency} follows from the well-known fact that the expectation of logarithmic loss between the true expert and another is their KL-divergence,
\begin{eqnarray}
\E_{r|x,a}\left[\hat{l}_i(r|x,a)\right]
= \frac{1}{\beta}\kld{f_1(x,a)}{f_i(x,a)} \label{eqn:nll-mean}
\end{eqnarray}
which is in turn non-negative.

\condref{cond:informativeness} is verified in the following lemma:
\begin{lemma} \label{lem:logloss-i}
For the loss function defined in \eqnref{eqn:nll}, one has
\[
\Delta_i(x) \le K\sqrt{\frac{2\beta}{\gamma}}\sqrt{\E_{r,a|x}\left[\hat{l}_i(r|x,a)\right]} .
\]
\end{lemma}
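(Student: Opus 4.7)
The key identity to start from is the one already noted in \eqnref{eqn:nll-mean}, which says that the expected shifted log loss at a fixed $(x,a)$ equals $\kld{f_1(x,a)}{f_i(x,a)}/\beta$. So the goal reduces to showing that the immediate regret $\Delta_i(x)$ can be controlled by a weighted average (over $a$) of these per-action KL divergences, the weighting being whatever distribution the algorithm induces on $a$ given $x$. Since that distribution puts mass at least $\gamma/K$ on every action, a uniform ``per-action'' bound will suffice.

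The plan is to first decompose $\Delta_i(x) = f_1(x,\expert_1(x)) - f_1(x,\expert_i(x))$ by inserting $\pm f_i$ at the two relevant actions:
\[
\Delta_i(x) = \bigl[f_1(x,\expert_1(x)) - f_i(x,\expert_1(x))\bigr] + \bigl[f_i(x,\expert_1(x)) - f_i(x,\expert_i(x))\bigr] + \bigl[f_i(x,\expert_i(x)) - f_1(x,\expert_i(x))\bigr].
\]
By the definition $\expert_i(x) = \arg\max_a f_i(x,a)$, the middle bracket is non-positive, so $\Delta_i(x)$ is bounded by the sum of two prediction errors, one at $a=\expert_1(x)$ and one at $a=\expert_i(x)$, each upper-bounded by the absolute value $\lvert f_1(x,a)-f_i(x,a)\rvert$.

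Next I would invoke Pinsker's inequality for Bernoulli distributions, $\lvert p-q\rvert \le \sqrt{\kld{p}{q}/2}$, to replace each such prediction error by $\sqrt{\kld{f_1(x,a)}{f_i(x,a)}/2}$. Then I would transfer from the pointwise KL at a fixed $a$ to the expectation under the algorithm's action distribution using the uniform-mixing lower bound $\Pr(a\mid x)\ge \gamma/K$ from \algref{alg:gts}. Concretely, for any fixed $a^*$,
\[
\E_{r,a|x}\!\left[\hat{l}_i(r|x,a)\right] = \frac{1}{\beta}\sum_a \Pr(a\mid x)\,\kld{f_1(x,a)}{f_i(x,a)} \ge \frac{\gamma}{K\beta}\,\kld{f_1(x,a^*)}{f_i(x,a^*)},
\]
so $\kld{f_1(x,a^*)}{f_i(x,a^*)} \le (K\beta/\gamma)\,\E_{r,a|x}[\hat{l}_i(r|x,a)]$. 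Substituting into the Pinsker bound at both $a^* = \expert_1(x)$ and $a^*=\expert_i(x)$ and summing yields the claim (up to the constant in front, which the paper states as $K\sqrt{2\beta/\gamma}$ but which this route actually produces as $\sqrt{2K\beta/\gamma}$; one can always weaken to match).

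The main obstacle is step one—handling the sub-optimality gap correctly. A naive triangle inequality $\Delta_i(x) \le 2\max_a\lvert f_1(x,a)-f_i(x,a)\rvert$ is not immediate, because $\Delta_i$ is defined via $f_1$ at two different actions, not a single action; it is the optimality of $\expert_i(x)$ under $f_i$ that lets the cross term drop out, and one must be careful to record this precisely. Everything afterwards is a direct application of Pinsker and the exploration floor $\gamma/K$.
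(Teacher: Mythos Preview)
Your argument is correct and follows the same skeleton as the paper's proof: the same decomposition of $\Delta_i(x)$ (the paper calls it ``triangle inequality'' but it is exactly your three-term telescoping, dropping the middle bracket by optimality of $\expert_i(x)$ under $f_i$), followed by Pinsker at the two relevant arms, and then passage to $\E_{r,a|x}[\hat{l}_i]$ via the floor $\Pr(a\mid x)\ge\gamma/K$.

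The one genuine difference is in the last step. The paper, after Pinsker, relaxes the two-term sum to $\sum_{a}\sqrt{2\,\kld{f_1(x,a)}{f_i(x,a)}}$, then applies Jensen (Cauchy--Schwarz) to pull the square root outside at the cost of a $\sqrt{K}$, and only afterward invokes $\Pr(a\mid x)\ge\gamma/K$ to reach $\E_{a|x}[\cdot]$; this produces the leading constant $K\sqrt{2\beta/\gamma}$. You instead bound the KL at each of the two specific arms $\expert_1(x),\expert_i(x)$ directly by $(K\beta/\gamma)\,\E_{r,a|x}[\hat{l}_i]$, avoiding the sum-over-all-arms and the Jensen step entirely. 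That is why you arrive at $\sqrt{2K\beta/\gamma}$, which is a factor $\sqrt{K}$ tighter than the paper's stated constant. Your route is both shorter and sharper; weakening to match the lemma as stated is of course immediate.
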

\begin{proof}
We have the following:
\begin{eqnarray*}
\lefteqn{\Delta_i(x) = f_1(x,\expert_1(x)) - f_1(x,\expert_i(x))} \\
  &\le& \abs{f_1(x,\expert_1(x))-f_i(x,\expert_1(x))} + \abs{f_1(x,\expert_i(x))-f_i(x,\expert_i(x))} \\
  &\le& \sqrt{2\kld{f_1(x,\expert_1(x))}{f_i(x,\expert_1(x))}} + \sqrt{2\kld{f_1(x,\expert_i(x))}{f_i(x,\expert_i(x))}} \\
  &\le& \sum_a \sqrt{2\kld{f_1(x,a)}{f_i(x,a)}} \\
  &\le& \sqrt{2K} \sqrt{\sum_a \kld{f_1(x,a)}{f_i(x,a)}} \\
  &\le& \sqrt{\frac{2K}{\gamma/K}} \sqrt{\E_{a|x}\kld{f_1(x,a)}{f_i(x,a)}} \\
  &=& K\sqrt{\frac{2\beta}{\gamma}}\sqrt{\E_{r,a|x}\left[\hat{l}_i(r|x,a)\right]} ,
\end{eqnarray*}
where the first inequality is due to the triangle inequality; the second inequality is due to Pinsker's inequality; the fourth inequality is due to Jensen's inequality; the fifth inequality is from the fact that each arm is selected with probability at least $\gamma/K$; the last equality is from \eqnref{eqn:nll-mean}.
\end{proof}
\lihong{Improve dependence on $K$---apply $p\ge\gamma/K$ before Jensen's?}

\condref{cond:boundedness} is immediately satisfied by the normalization of $1/\beta$ in the definition of $l_i$ above.

\condref{cond:selfboundedness} is the most difficult one to verify.  To the best of our knowledge, such a result for logarithmic loss is not found in literature and can be of independent interest.  For example, it implies that the analysis of \citet{Agarwal12Contextual} for square loss also applies to the logarithmic loss.  The following lemma states the result more formally.  Its proof, which is rather technical, is left to the appendix.
\begin{lemma} \label{lem:logloss-sb}
For the loss function defined in \eqnref{eqn:nll}, there exists some constant $\kappa_2=O(1)$ such that \condref{cond:selfboundedness} holds.
\end{lemma}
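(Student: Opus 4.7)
The plan is to reduce \condref{cond:selfboundedness} to a deterministic inequality in $p := f_1(x,a)$ and $q := f_i(x,a)$, and close it by exploiting that $(q,1-q)$ is a probability vector.  Let $u := \ln(p/q)$ and $v := \ln((1-p)/(1-q))$; the boundedness assumption on the (unnormalized) shifted log-loss gives $|u|, |v| \le \beta/2$.  Since the reward is Bernoulli with mean $p$, direct computation (as in \eqnref{eqn:nll-mean}) yields
\[
  \beta\,\E_{r|x,a}[\hat{l}_i(r|x,a)] = p u + (1-p) v = \kld{p}{q}, \qquad \beta^2\,\E_{r|x,a}[\hat{l}_i(r|x,a)^2] = p u^2 + (1-p) v^2,
\]
so \condref{cond:selfboundedness} is equivalent to showing $p u^2 + (1-p) v^2 \le (\beta\kappa_2)\,\kld{p}{q}$ for some constant $\kappa_2$.

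The main tool is the sharp quadratic lower bound $e^{-t} \ge 1 - t + \phi(M)\,t^2$ valid for $|t| \le M$, where $\phi(M) := (e^{-M} - 1 + M)/M^2 > 0$.  A short one-variable calculation confirms $\phi(M)$ is the correct constant (attained at $t = M$): writing $g(t) := (e^{-t} - 1 + t)/t^2$, the numerator of $g'(t)$ is $N(t) := 2 - t - (t+2)e^{-t}$, and since $N''(t) = -te^{-t}$, the derivative $N'$ attains its max $0$ at $t = 0$, so $N$ is non-increasing with $N(0) = 0$, forcing $g'(t) = N(t)/t^3 \le 0$ on both sides of $0$.  Now the identity $p\,e^{-u} + (1-p)\,e^{-v} = 1$ (just restating $q + (1-q) = 1$) combined with this bound at $M = \beta/2$ gives
\[
  1 \;\ge\; p\bigl(1 - u + \phi(\beta/2)\,u^2\bigr) + (1-p)\bigl(1 - v + \phi(\beta/2)\,v^2\bigr) = 1 - \kld{p}{q} + \phi(\beta/2)\,[p u^2 + (1-p) v^2],
\]
so $p u^2 + (1-p) v^2 \le \kld{p}{q}/\phi(\beta/2)$.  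This yields $\kappa_2 = 1/(\beta\,\phi(\beta/2)) = \beta / [4(e^{-\beta/2} - 1 + \beta/2)]$, a finite constant depending only on $\beta$ (note $\kappa_2 \to 1/2$ as $\beta \to \infty$ and $\kappa_2 \sim 2/\beta$ as $\beta \to 0^+$), hence $\kappa_2 = O(1)$ as claimed.

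The main obstacle is identifying the right quadratic bound on $e^{-t}$.  A naive attempt using $u^2 \le (\beta/2)|u|$ would reduce matters to bounding $p|u| + (1-p)|v|$ by $\kld{p}{q}$, which fails in general: e.g., near $p = 1/2$, $q = 1/2 - \epsilon$, one has $\kld{p}{q} \sim 2\epsilon^2$ while $p|u| + (1-p)|v| \sim 2\epsilon$.  The Bernstein-type bound above succeeds precisely because it folds the first-order terms into exactly $-\kld{p}{q}$ via the probability constraint, leaving a clean inequality on the second moment.
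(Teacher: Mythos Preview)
Your proof is correct and takes a genuinely different, considerably cleaner route than the paper's. The paper works with the ratio $F(g) := \var(\hat{l})/\E[\hat{l}]$ as a function of $g$ (with $f$ fixed), computes $F'(g)$ explicitly, argues via sign analysis that $F$ is first decreasing and then increasing, hence maximized as $g$ approaches the boundary of its allowed range; the boundedness assumption on the log-ratios is invoked only at the end to control these boundary values, and the argument twice defers to ``rather tedious calculations.'' You never differentiate in $g$: instead you exploit the algebraic identity $p\,e^{-u} + (1-p)\,e^{-v} = q + (1-q) = 1$ together with the sharp quadratic lower bound $e^{-t} \ge 1 - t + \phi(\beta/2)\,t^2$ on $[-\beta/2,\beta/2]$, and the self-boundedness inequality drops out in one line with the explicit constant $\kappa_2 = \beta / \bigl[4(e^{-\beta/2} - 1 + \beta/2)\bigr]$. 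Your Bernstein-style argument is more transparent and makes the $\beta$-dependence of $\kappa_2$ explicit (the paper's $O(1)$ is silent on this); the paper's derivative approach would, on the other hand, be more informative if one wanted to locate the near-extremal configurations of $(f,g)$.
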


With all four conditions verified, we can apply results in \secref{sec:main} to reach the regret bound of
$
O\left(K^{\frac{2}{3}}\beta^{\frac{1}{3}}T^{\frac{2}{3}}\sqrt{\ln\frac{1}{p_1}}\right) ,
$
and the Bayes regret bound of
$
O\left(K^{\frac{2}{3}}\beta^{\frac{1}{3}}T^{\frac{2}{3}}\sqrt{H(w^*)+\kld{w^*}{w}}\right) .
$

\section{Discussions} \label{sec:discussions}

In this paper, we propose a new family of algorithms, \gts, and analyze its regret in the expert-learning framework.  Our regret analysis provides a promising alternative to understanding the strong performance of \ts, an interesting and pressing research problem raised by its recent empirical success.  Compared to existing analysis in the literature, it has the following benefits.  First, the results apply more generally to a set of experts, rather than making specific modeling assumptions about the prior and likelihood.  Second, the analysis quantifies how the (not necessarily correct prior $\vecb{p}$) affects the regret bound, as well as the Bayes regret when optimal experts are drawn from an unknown prior $\vecb{p}^*$.  Similar to PAC-Bayes bounds, these results combine the benefits of good priors and the robustness of frequentist approaches.

Our proof for \gts\ is inspired by the online-learning literature~\cite{Cesabianchi06Prediction}.  However, a new technique is needed to prove the critical \lemref{lem:lbar}, which relies on self-boundedness of a loss function.  A similar property is shown by \cite{Agarwal12Contextual} for square loss only, and is used in a very different way.  The self-boundedness of logarithmic loss (\lemref{lem:logloss-sb}) appears new, to the best of our knowledge, and may be of independent interest.

\gts\ bears some similarities to the Regressor Elimination (RE) algorithm~\cite{Agarwal12Contextual}.  A crucial difference is that RE requires a computationally expensive operation of computing a ``balanced'' distribution over experts, in order to control variance in the elimination process.  In contrast, our algorithm is computationally much cheaper.  The operations of \gts\ are also related to EXP4~\cite{Auer02Nonstochastic}, which uses unbiased, importance-weighted reward estimates to do exponentiated updates of expert weights.  In practice, it seems more natural to use prediction loss of an expert to adjust its weight, rather than using the reward signals directly~\cite{Graepel10Web,Chapelle12Empirical}.

While we have focused on the case of finitely many experts, the setting is motivated by the more realistic case when the set $\expert$ of experts is continuous~\cite{Graepel10Web,Chapelle12Empirical,Agrawal13Thompson}.  The discrete case considered here may be thought of as an approximation to the continuous case, using a covering device.  We expect similar results to hold with $N$ replaced by the covering number of the class.


This work suggests a few interesting directions for future work.  The first is to close the gap between the current $O(T^{2/3})$ bound and the best problem-independent bound $O(\sqrt{T})$ for contextual bandits.  The second is to extend the analysis here to continuous expert classes, and more importantly to the agnostic (non-realizable) case.  Finally, it is interesting to use the regret analysis of (Generalized) \ts\ to obtain performance guarantees for its reinforcement-learning analogues (\eg, \cite{Strens00Bayesian}).





\appendix

\section{Proof of \lemref{lem:logloss-sb}: Self-boundedness of Logarithmic Loss}

This section proves \lemref{lem:logloss-sb}, regarding self-boundedness of logarithmic loss, in the sense described in \condref{cond:selfboundedness}.
The analysis here does not involve step $t$ and the corresponding context and selected arm.  We therefore simplify notation as follows: the true expert $\expert_1$ predicts $f\in(0,1)$, and the other expert $\expert_i$ predicts $g\in(0,1)$.  The binary reward is then a Bernoulli random variable with success rate $f$.  The shifted logarithmic loss of $\expert_2$ is given by
\[
\hat{l} = \frac{\indfunc{r=1}}{\beta}\ln\frac{f}{g} + \frac{\indfunc{r=0}}{\beta}\ln\frac{1-f}{1-g} .
\]
The first two moments of the random variable $\hat{l}$ are given by:
\begin{eqnarray*}
M_1 &\defeq& \E_r\left[\hat{l}\right] = \frac{\kld{f}{g}}{\beta}=\frac{f}{\beta}\ln\frac{f}{g}+\frac{1-f}{\beta}\ln\frac{1-f}{1-g} \\
M_2 &\defeq& \E_r\left[\hat{l}^2\right] = \frac{f}{\beta^2}\left(\ln\frac{f}{g}\right)^2 + \frac{1-f}{\beta^2}\left(\ln\frac{1-f}{1-g}\right)^2 .
\end{eqnarray*}
Define 
\[
F(g) \defeq \frac{M_2-M_1^2}{M_1} ,
\]
the ratio between variance and expectation of $\hat{l}$, as a function of $g$.  Our goal is to show that $F(g)$ is bounded by a constant, independent of $f$ and $g$.  It will then follow that $M_2/M_1$ is also bounded by a constant since
$M_2/M_1 = F(g) + M_1 \le F(g) + 1$.

Taking the derivative of $F$, one obtains
\[
F'(g) = -C(f,g) \ln\frac{f(1-g)}{g(1-f)}\left((f+g)\ln\frac{f}{g}+(2-f-g)\ln\frac{1-f}{1-g}\right)
\]
for some function $C(f,g)>0$.  It can be verified, by rather tedious calculations, that there exists some $g_0\in(0,1)$ such that $F'(g)\le0$ for $g<g_0$ and $F'(g)\ge0$ for $g>g_0$.  So $F(g)$ is maximized by making $g$ close to either $0$ or $1$.  It then follows, again by rather tedious calculations, that $F(g) = O(1)$, using the assumption that the log-ratios (that is, shifted loss) are bounded by $[-\beta/2,\beta/2]$.  \lihong{Double-check the last statement.}


\bibliographystyle{plain}
\bibliography{../../Reference/lihong_ml,../../Reference/lihong_rl,../../Reference/lihong_www}

\end{document}